\newcommand{\m}[1]{\mathbf{#1}}
\newcommand{\ignore}[1]{}
\newtheorem{theorem}{Theorem}[section]
\newtheorem{lemma}[theorem]{Lemma}
\title{Adversarial Removal of Demographic Attributes from Text Data}
\author{Yanai Elazar$^\dagger$ \and Yoav Goldberg$^\dagger$$^\ast$ \\[0.5em]
$^\dagger$Computer Science Department,  Bar-Ilan University, Israel \\[0.5em]
$^\ast$Allen Institute for Artificial Intelligence \\[0.5em]
\texttt{\{yanaiela,yoav.goldberg\}@gmail.com} 
}
\date{}
\begin{document}
\maketitle
\begin{abstract}

  Recent advances in Representation Learning and Adversarial Training seem to succeed in removing unwanted features from the learned representation.
 We show that demographic information of authors is encoded in---and can be recovered from---the intermediate representations learned by text-based neural classifiers. 
The implication is that decisions of classifiers trained on textual
data are not agnostic to---and likely condition on---demographic attributes.
When attempting to remove such demographic information using adversarial training, we find that while the adversarial component achieves chance-level development-set accuracy during training, a post-hoc classifier, trained on the encoded sentences from the first part, still manages to reach substantially higher classification accuracies on the same data. This behavior is consistent across several tasks, demographic properties and datasets. We explore several techniques to improve the effectiveness of the adversarial component.
Our main conclusion is a cautionary one: do not rely on the adversarial training to achieve invariant representation to sensitive features.
\end{abstract}

\section{Introduction}
Consider automated systems that are used for determining credit ratings, setting insurance policy rates, or helping in hiring decisions about individuals. 
We would like such decisions to not take into account factors such as the gender or the race of the individual, or any other factor which we deem to be irrelevant to the decision.
We refer to such irrelevant factors as \emph{protected attributes}.
The naive solution of not including protected attributes in the features to a Machine Learning system is insufficient: other features may be highly correlated with---and thus predictive of---the protected attributes \cite{pedreshi2008discrimination}.
For example, in Credit Score modeling, text might help in credit score decisions \cite{ghailan2016improving}. By using the raw text as is, a discrimination issue might arise, as textual information can be predictive of some demographic factors \cite{hovy2015user} and author's attributes might correlate with target variables \cite{zhao2017men}.

In this paper we are interested in language-based features. It is well established that textual information can be predictive of age, race, gender, and many other social factors  of the author \cite{koppel2002automatically, burger2011discriminating, nguyen2013old,weren2014examining,verhoeven2014clips, rangel2016overview,verhoeven2016twisty,blodgett-green-oconnor:2016:EMNLP2016},  or even the audience of the text \cite{voigt2018rtgender}.

Thus, any system that incorporates raw text into its decision process is at risk of indirectly conditioning on such signals.
Recent advances in representation learning suggest \emph{adversarial
training} as a mean to hide the protected attributes from the decision function (Section \ref{sec:adversarial}). 
We perform a series of experiments and show that:
(1) Information about race, gender and age is indeed encoded into intermediate representations of neural networks, even when training for seemingly unrelated tasks and the training data is balanced in terms of the protected attributes (Section \ref{sec:baselines}); 
(2) The adversarial training method is indeed effective for reducing the amount of protected encoded information... (3) ...but in
some cases even though the adversarial component seems to be doing a perfect job, a fair amount of protected information still remains, and can be extracted from the encoded representations (Section \ref{sec:leakage-adv}).

This suggests that when working with text data it is very easy to condition on sensitive properties by mistake. 
Even when explicitly using the adversarial training method to remove such properties, one should not blindly trust the adversary, and be careful to ensure the protected attributes are indeed fully removed.
We explore means for improving the effectiveness of the adversarial training procedure
(section \ref{sec:improved-adv}).\footnote{The code and data acquisition are available in: \url{https://github.com/yanaiela/demog-text-removal}}

However, while successful to some extent, none of the methods fully succeed in removing all demographic information. Our main message, then, remains cautionary: \textbf{if the goal is to ensure fairness or invariant representation, do not trust adversarial removal of features from text inputs for achieving it.}

\section{Learning Setup}
\label{sec:setup}
We follow a setup in which we have some labeled data $D$ composed of documents
$x_1,...,x_n$ and task labels $y_1,...,y_n$.
We wish to train a classifier $f$ that accurately
predicts the main task labels $y_i$. Each data point $x_i$ is also associated with
a \emph{protected attribute} $z_i$, and we want the decision $y_i = f(x_i)$ to be oblivious to
$z_i$. 
Following \cite{ganin2015unsupervised,xie2017controllable}, we structure $f$ as an encoder $h(x)$ that maps $x$ into a representation
vector $\m{h}_x$, and a classifier $c(h(x))$ that is used for predicting $y$ based on
$\m{h}_x$. 
If $\m{h}_{x_i}$ is not predictive of $z_i$, then the main task prediction
$f(x_i)=c(h(x_i))$ does not depend on $z_i$. 

We say that a protected attribute $z$ has \emph{leaked} if we can train a classifier $c'(\m{h}_{x_i})$ to predict $z_i$ with an accuracy beyond chance level, and that the protected attribute is \emph{guarded} if we cannot train such a
classifier. We say that a classifier $f(x)=c(h(x))$ is \emph{guarded} if $z$ is guarded, and that it is \emph{leaky} with respect to $z$ if $z$ leaked.

\paragraph{Adversarial Training}
\label{sec:adversarial}
In order to make $f$ oblivious to $z$, we follow the adversarial training setup \cite{goodfellow2014generative, ganin2015unsupervised, beutel2017data, xie2017controllable}. During training, an adversarial classifier $adv(\m{h}_x)$
is trained to predict $z$, while the encoder $h$ is trained to make $adv$ fail.
Concretely, the training procedure tries to jointly optimize both quantities:
\begin{eqnarray*}
\arg\min_{adv}&L(adv(h(x_i)),z_i) \\
\arg\min_{h,c}&L(c(h(x_i)), y_i) - L(adv(h(x_i)),z_i)
\end{eqnarray*}
where $L(y',y)$ is the loss function (in our case, cross entropy).
This objective results in creating the representation $\m{h}_x$ s.t. it's maximally informative for the main task, while at the same time minimally informative of the protected attribute.
The optimization is performed in practice using the gradient-reversal layer
(GRL) method \cite{ganin2015unsupervised}. The GRL is a layer
$g_\lambda$ that is inserted between the encoded vector $\m{h}_x$ and the
adversarial classifier $adv$. During the forward pass the layer acts as the
identity, while during backpropagation it scales the gradients passed through it
by $-\lambda$, causing the encoder to receive the opposite gradients from the
adversary. The meta-parameter $\lambda$ controls the intensity of the reversal
layer. This results in the objective:
\[
\arg\min_{h,c,adv}L(c(h(x_i)), y_i) + L(adv(g_{\lambda}(h(x_i))), z_i)
\]
\\\noindent\textbf{Attacker Network} To test the effectiveness of the adversarial training,
we use an \emph{attacker network} $att(\m{h}_x)$. After the classifier $c(h(x))$
is fully trained, we use the encoder to obtain representations $\m{h}$, and
train the attacker network to predict $z$ based on $\m{h}$, without access to
the encoder or to the original inputs $x$ that resulted in $\m{h}$. 
If, after training, the attacker can predict $z$ on unseen examples with an accuracy of beyond chance level, then
the attribute $z$ leaked to the representation, and the classifier is \textit{not guarded}.\\
\noindent\textbf{Network Architecture} In our setup, an example $x_i$ is a sequence of tokens $w_1,...,w_{m_i}$ and the encoder is a one layer LSTM network that reads in the associated embedding vectors and returns the final state: $\m{h} = LSTM(\m{w}_{1:m})$.  The classifier $c$ and the adversarial $adv$ are both multi-layer perceptrons with one hidden layer, sharing the same hidden layer size and activation function (tanh).\footnote{Further details regarding the architecture and training parameters can be found in the supplementary materials.}

\section{Data, Tasks, and Protected Attributes}

To perform our experiments, we need a reasonably large dataset in which the data-points $x$ contain textual information, and for which we have both main-task labels $y$ and protected attribute labels $z$. 
While our motivating example used prediction tasks for credit rating, insurance rates or hiring decisions, to the best of our knowledge there are no publicly
available datasets for these sensitive tasks that meet our criteria. We thus
opted to use much less sensitive main-tasks, for which we can obtain the needed
data.
We focus on Twitter messages, and our protected attributes are binary-race
(non-hispanic Whites vs. non-hispanic Blacks), binary-gender (Male vs. Female)\footnote{While gender is a non-binary
construct, many decisions in the real-world are unfortunately still influenced
by hard binary gender categories. We thus consider binary-gender to be a useful
approximation in our context.} and binary-age (18-34 vs. 35+). As main tasks we
chose \emph{binary emoji-based sentiment prediction} and \emph{binary
tweet-mention prediction}.
Both the sentiment and the mention prediction tasks are not inherently correlated with race, gender or age.  Protected attributes leakage in these seemingly benign main-tasks is a strong indicator that such leakage is likely to occur also in more sensitive tasks.

\paragraph{Main Tasks: Sentiment and Mention-detection}
Both tasks can be derived automatically from twitter data.
We construct a binary ``sentiment'' task by identifying a subset of emojis which are associated with positive and negative sentiment,\footnote{Complete list is available in Appendix \ref{sec:emojis}} identifying tweets containing these emojis, assigning them with the corresponding sentiment and removing the emojis. Tweets containing emojis from both sentiment lists are discarded. The binary mention task is to determine if a tweet mentions another user, i.e, classifying conversational vs. non-conversational tweets. We derive this dataset by identifying tweets that include @mentions tokens, and removing all such tokens from the tweets.

\paragraph{Protected: Race}
The race annotation is based on the dialectal tweets (\textsc{Dial}) corpus from \cite{blodgett-green-oconnor:2016:EMNLP2016}, consisting of 59.2 million tweets by 2.8 million users.
Each tweet is associated with predicted ``race'' information which was predicted using a technique that takes into account the geo-location of the author and the words in the tweet. We focus on the AAE (African-American English) and SAE (Standard American English) categories, which we use as proxies for non-Hispanic blacks and non-Hispanic whites.

We chose only annotations with confidence (the probability of the authors' race) of above 80\%. Due to its construction, the race annotations in this dataset are highly correlated with the language being used. As such, the data reflects an extreme case in which the underlying language is very predictive of the protected attribute.

\paragraph{Protected: Age and Gender} We use data from the PAN16 dataset \cite{rangel2016overview}, containing manually annotated Age and Gender information of 436 Twitter users, along with up to 1k tweets for each user. User annotation was performed by consulting the user's LinkedIn profile.  Gender was determined by considering the user's name and photograph, discarding unclear cases. Age range was determined by birth-date which was published on the user's profile, or by mapping their degree starting date.

\paragraph{Data-splits}
From the \textsc{Dial} corpus we extracted 166K and 10K tweets for training and development purpose respectively (after cleaning and extracting relevant tweets), whereas for the PAN16 dataset we collected 160K tweets for training and 10K for development. 
The train/development split in both phases of the training (\textit{task-training} and \textit{attacker-training}) is the same.
This is the worst possible scenario for the attacker, as it is training on the exact representations the adversary attempted to remove the protected attribute from. Each split is balanced with respect to both the main and the protected labels: a random prediction of each variable is likely to result in 50\% accuracy.

\paragraph{Metrics}
Throughout this paper, we measure leakage using accuracy. We say that the protected attribute has leaked if an \textit{attacker} manages to predict the protected attribute with better than 50\% accuracy, which is always the probability of that attribute ($P(Z)=0.5$). In Appendix \ref{sec:fair-acc} we relate our metric to more standard fairness metrics, and prove that in our setup a guarded predictor guarantees demographic parity, equality of odds, and equality of opportunity.
Note however that we also show empirically that such guarded predictors are very hard to attain in practice.

\section{Baselines and Data Leakage}
\label{sec:baselines}
\paragraph{In-dataset Accuracy Upper-bounds}
We begin by examining how well can we perform on each task (both main-tasks and protected attributes) when training the encoder and classifier directly on that task, without any adversarial component. This provides an upper bound on the protected attribute leakage for the main tasks results. The results in Table \ref{tbl:direct-task} indicate that the classifiers achieve reasonable accuracies for the main tasks.\footnote{While the sentiment score may seem low, we manually verified the erroneous predictions and found out that many of them are indeed ambiguous with respect to sentiment, e.g. sentences like ``I can't take Amanda seriously \includegraphics[width=0.12in]{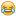}'' and ``You make me so angry, yet you make me so happy. \includegraphics[width=0.12in]{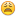}'' which were predicted negative and positive respectively, but their gold label was the opposite.} 
For the protected attributes, \emph{race} is highly predictable (83.9\%) while \emph{age} and \emph{gender} can also be recovered at above 64\% accuracy.
\begin{table}[H]
	\begin{center}
    \footnotesize{
    \begin{tabular}{llc}
        Data & Task & Accuracy \\
        \hline \hline
        \textsc{Dial}& Sentiment		& 67.4 \\
        & Mention   	& 81.2 \\
        & \emph{Race}      	& 83.9 \\
        \hline
        
        PAN16 & Mention	  	& 77.5 \\
        & \emph{Gender} 	 	& 67.7 \\
        & \emph{Age}		  	& 64.8
    \end{tabular}

    }
    \end{center}
    \caption{Accuracies when training directly towards a single task.}
    \label{tbl:direct-task}
\end{table}
\begin{table*}[t!]
\begin{center}
\footnotesize{

\begin{tabular}{lll|cc|cc}
      &           &                     & \multicolumn{2}{c}{Balanced} & \multicolumn{2}{c}{Unbalanced} \\
Data  & Task      & Protected Attribute & Task Acc      & Leakage      & Task Acc       & Leakage       \\ \hline \hline
\textsc{Dial}  & Sentiment & Race                & 67.4          & 64.5         & 79.5           & 73.5          \\
      & Mention   & Race                & 81.2          & 71.5         & 86.0           & 73.8          \\ \hline
PAN16 & Mention   & Gender              & 77.5          & 60.1         & 76.8           & 64.0          \\
      &           & Age                 & 74.7          & 59.4         & 77.5           &    59.7      
\end{tabular}

}
\end{center}
\caption{Protected attribute leakage: balanced \& unbalanced data splits.}
\label{tbl:leakage}
\end{table*}
\noindent\textbf{Leakage} When training directly for the protected
attributes, we can recover them with relatively high accuracies. But is information about them being encoded when we train on the main tasks? In this set of experiments, we encode the training and validation sets using the encoder trained on the main task, and train the attacker network to predict the protected attributes based on these vectors. This experiment suggests an upper bound on the amount of leakage of protected attributes when we do not actively attempt to prevent it.
The Balanced section in Table \ref{tbl:leakage} summarizes the validation-set accuracies.
While the numbers are lower than when training directly (Table \ref{tbl:direct-task}), they are still high enough to extract meaningful and possibly highly sensitive information (e.g. \textsc{Dial} Race direct prediction is 83.9\% while \textsc{Dial} Race leakage on the balanced Sentiment task is 64.5\%).

\paragraph{Leakage: Unbalanced Data}
\label{par:leakage}
\begin{figure}[t!]
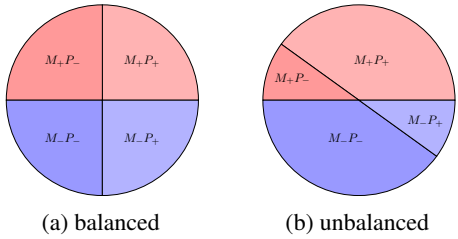

    \centering
    \begin{subfigure}[t]{0.2\textwidth}
        \centering
        \includegraphics[height=1.0in]{figures/dist.tex}
        \caption{balanced}
        \label{fig:balanced}
    \end{subfigure}%
    ~ 
    \begin{subfigure}[t]{0.2\textwidth}
        \centering
        \includegraphics[height=1.0in]{figures/dist-unbalanced.tex}
        \caption{unbalanced}
        \label{fig:unbalanced}
    \end{subfigure}
    \caption{Balanced (a) vs. Unbalanced (b) dataset. \textbf{Red(M+)/Blue(M-)}: Main Task. \textbf{Light(P+)/Dark(P-)}: Protected attribute. Each class is globally balanced, but in (b) the proportion of the protected attribute within each main task split is unbalanced.}
\end{figure} 

The datasets we considered were perfectly balanced
with respect to both main task and protected attribute labels (Figure \ref{fig:balanced}).
Such extreme case is not representative of real-world datasets, in which a dataset may be well balanced w.r.t. the main task labels but not the protected attribute. For example, when training a classifier to predict a fit for managerial position based on Curriculum Vitae (CV) of candidates, the CV dataset may be perfectly
balanced according to the managerial / non-managerial variable, but, because of existing social biases, CVs of females might be under-represented in the managerial category and over-represented in the non-managerial one.  In such a situation, the classifier may perpetuate the bias by learning to favor males over females for managerial positions. We simulate this more realistic scenario by constructing unbalanced datasets in which the main tasks (sentiment/mention) remain balanced but the protected class proportions within each main class are not, as demonstrated in Figure \ref{fig:unbalanced}. For example, in the sentiment/gender case, we set the positive-sentiment class to contain 80\% male and 20\% female tweets, while the negative-sentiment class contains
20\% male and 80\% female tweets.
We then follow the leakage experiment on the unbalanced datasets. The attacker is trained and tested on a balanced dataset. Otherwise, the attacker can perform quite well on the male/female task simply by learning to predict sentiment, which does not reflect leakage of gender data to the representation. When training the attacker on balanced data, its decisions cannot rely on the sentiment information encoded in the vectors, and must look for encoded information about the protected attributes.
The results in Table \ref{tbl:leakage} indicate that both task accuracy and attribute leakage are stronger in the unbalanced case.

\paragraph{Leakage: Real-world Example}
The above experiments used artificially constructed datasets. Here, we demonstrate leakage using a popular encoder trained for emotion detection: the DeepMoji encoder \cite{felbo2017} trained to predict the most suitable emoji usage for a sentence (one of 64 in total), based on 1.2 billion tweets. The model is advertised as a good encoder for encoding sentences into a representation that is highly predictive of sentiment, mood, emotion and sarcasm. Does it also capture \textit{protected attributes}? We encode the sentences of the different \textit{protected attributes} using the DeepMoji encoder and train three different \textit{attackers} to predict race, gender and age. The best scores on the development set are 84.7\%, 67.2\% and 67.1\% respectively. This should not come as a surprise, as indeed some emoji usage is highly correlated with these properties.

\section{Mitigating Data Leakage}
Leakage of protected attributes information into the internal representation of the network when training on seemingly unrelated tasks is very common. We explore the means of mitigating such leakage.

\subsection{Adversarial Training}
\label{sec:leakage-adv}
We repeat the experiments in Table \ref{tbl:leakage} with an adversarial component \cite{ganin2015unsupervised} as described in Section \ref{sec:setup}, in order to actively remove the protected attribute information from the encoded representation during training.
Note that the adversarial objective is in odds with the main-task one: by removing the protected attribute information from the encoder, we may also hurt its ability to encode information about the main task.

Figure \ref{fig:adversarial} shows the main task and adversary prediction accuracies on the development set as training progresses, for the Sentiment/Race pair.

\begin{figure}[h!]
   
   \begin{center}
   \includegraphics[scale=0.5]{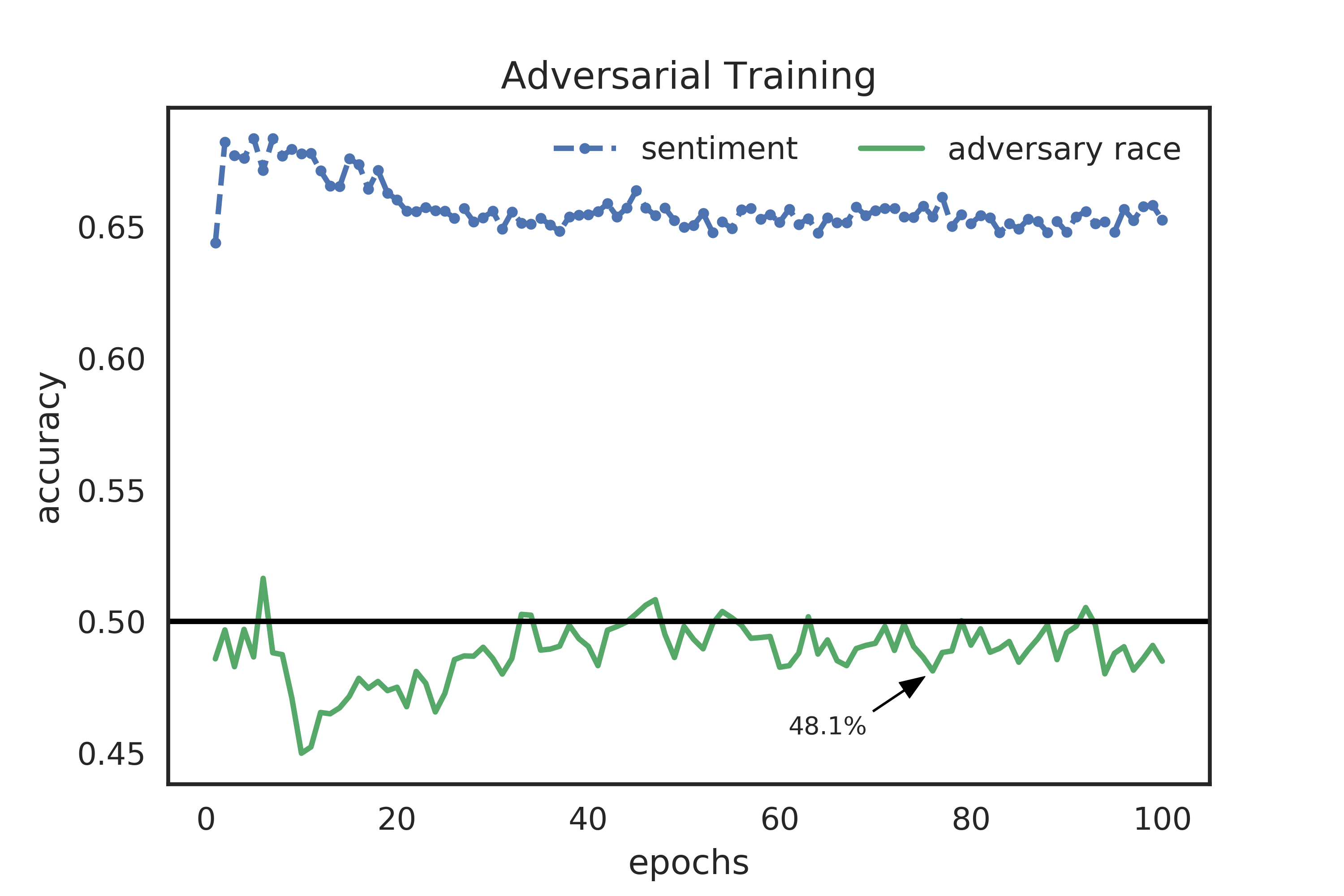}
   \end{center}
   \caption{Main task and Adversary accuracy curves for Sentiment/Race.}
    \label{fig:adversarial}
\end{figure}

After an initial peak in task prediction accuracy, the adversary prediction drops and starts to fluctuate around chance level (50\%), as desired, along with a drop in main task accuracy as well.
The adversary's accuracy remain around chance level throughout the entire training process, suggesting that the adversarial training is indeed effective for removing the protected attribute information from the encoded representation. These trends are persistent for all main-task/protected-attribute pairs we tried.

However, training the attacker network on the resulting encoder vectors reveals a different story. For example, when considering the encoder after 50 training epochs (adversary accuracy of 49.0\%), the attacker reaches 56.0\% accuracy: substantially higher than the adversarial's success rate, despite sharing the exact same architecture, and being trained and tested on the exact same dataset.

Table \ref{tbl:adv} summarizes the attacker's recovery rate on the adversarialy-trained encoders for the different settings. In all cases, the adversarial's success rate is around 50\%, while the attacker's rate is substantially higher. 
The attacker's rates are lower than in Table \ref{tbl:direct-task}, indicating the adversarial training is effective in removing \emph{some} of the protected attribute. However, \textbf{a substantial amount of information managed to leak past the adversary, despite its seemingly perfect performance}.

\begin{table}[ht!]
\begin{center}

\footnotesize{

\begin{tabular}{lllrrr}
Data  	& Task & \begin{tabular}[c]{@{}l@{}}Protected\\ Attribute\end{tabular} & \begin{tabular}[c]{@{}l@{}}Task\\ Acc\end{tabular}  & Leakage & $\Delta$ \\ \hline \hline
\textsc{Dial} 	& Sentiment & Race      & 64.7      & 56.0 & 5.0   \\
	 	& Mention   & Race      & 81.5      & 63.1 & 9.2   \\ \hline
PAN16 	& Mention	& Gender	& 75.6 		& 58.5 & 8.0 	 \\
	  	& Mention   & Age		& 72.5		& 57.3 & 6.9
\end{tabular}
}
\end{center}
\caption{Performances on different datasets with an adversarial training. $\Delta$ is the difference between the attacker score and the corresponding adversary's accuracy.}
\label{tbl:adv} 
\end{table}

\paragraph{Attacker's Accuracy on Unseen Data} 
We verify that the attacker's recovery accuracy persists also on the more realistic scenario in which the attacker is applied to encoded sentences that did not participate in the adversarial training. We constructed an additional dataset of 166K completely unseen samples from the Sentiment/Race case. As expected, the attacker works even better in this case, reaching an accuracy of 59.7\% Vs. 56.0\% on the original development set.

\subsection{Strengthening the Adversarial Component}
\label{sec:improved-adv}

We explore means of strengthening the adversarial component, by tuning its capacity and its weight, as well as by using a novel adversarial-ensemble configuration.

\paragraph{Capacity}
We increase the capacity of the adversarial component by increasing its hidden dimension, while keeping the attacker's hidden dimension constant at 300 dimensions. We try hidden dimensions of size 500, 1000, 2000, 5000 and 8000.

\paragraph{Weight}
We experiment with different weighting of the adversarial component during
training by tuning the $\lambda$ parameter, trying the values 0.5, 1.0 (default), 1.5, 2,
3, 5 (with values above 5 the main task training became extremely unstable, not raising above 50\%).

\paragraph{Ensemble}
An alternative to using larger $\lambda$ values is to introduce several adversaries. The potential benefit of this approach is that rather than focusing harder on removing a single feature, here the different adversaries could each focus on a different aspect of the representation. This approach is potentially better suited to deal with language variability. Concretely, we suggest the following adaptation to the adversarial loss to incorporate $k$ adversaries with different random initializations:
\[
L_{y}(c(h(x)), y) + \sum_{j=1}^{k} L_z(adv_j(g_{\lambda}(h(x))), z)
\]

\begin{table*}[t]
\begin{center}
\footnotesize{
\begin{tabular}{lr|ccc|ccc|ccc}
                      & \multicolumn{1}{l|}{} & \multicolumn{3}{c|}{\textsc{Dial}}                    & \multicolumn{6}{c}{PAN16}                                                        \\
Method             & Parameter                 & Sentiment & Race            & $\Delta$       & Mention & Gender          & $\Delta$       & Mention & Age             & $\Delta$       \\ \hline \hline
No Adversary Baseline & -     & 67.4   & 14.5  & -     & 77.5 & 10.1 & -   	& 74.7 & 9.4           & -              \\
Standard Adversary    & (300/1.0/1)   & 64.7   & 6.0  & 5.0 & 75.6 & 8.5 & 8.0 & 72.5 & 7.3 & 6.9  \\ \hline
Adv-Capacity          
&  500 		& 64.1 & 6.7 & 5.2 & 73.8 & 8.1 & 6.7 & 71.4 & \textbf{4.3} & 4.1 \\
&  1000 	& 63.4 & 7.1 & 4.9 & 75.2 & 8.9 & 7.0 & 71.6 & 6.3 & 4.0 \\
&  2000 	& 65.2 & 8.1 & 6.9 & 76.1 & 6.7 & 6.4 & 71.9 & 6.0 & 5.7 \\
&  5000 	& 63.9 & \textbf{6.2} & 3.7 & 74.5 & 5.6 & 1.6 & 73.0 & 10.2 & 9.6 \\
&  8000 	& 65.0 & 7.1 & 4.8 & 75.7 & \textbf{5.4} & 4.2 & 71.9 & 9.8 & 7.3 \\
 \hline
 
$\lambda$                
&  0.5 	& 63.9 & \textbf{6.8} & 6.2 & 75.6 & 7.8 & 6.8 & 73.1 & 4.8 & 3.4 \\
&  1.5 	& 64.9 & 7.4 & 5.4 & 75.6 & \textbf{4.9} & 2.4 & 72.5 & 6.8 & 5.8 \\
&  2.0 	& 64.2 & 7.3 & 5.9 & 76.0 & -7.2 & 6.7 & 72.1 & 8.5 & 7.7 \\
&  3.0 	& 65.8 & 10.2 & 10.1 & 73.7 & 6.4 & 6.1 & 72.5 & -6.3 & 5.2 \\
&  5.0 	& 50.0 & - & - & 73.6 & 6.5 & 5.7 & 69.0 & \textbf{3.2} & 2.9 \\ \hline
Ensemble              
&  2 	& 62.4 & 7.4 & 5.4 & 74.8 & 6.4 & 5.0 & 72.8 & 8.8 & 8.3 \\
&  3 	& 66.5 & 6.5 & 5.0 & 75.3 & 4.9 & 3.1 & 72.1 & 6.7 & 6.0 \\
&  5 	& 63.8 & \textbf{4.8} & 2.6 & 74.3 & \textbf{4.1} & 3.0 & 70.1 & \textbf{5.7} & 5.4 \\        
\end{tabular}
}
\end{center}
\caption{\label{tbl:algo-all} Results of different adversarial configurations.
\textbf{Sentiment/Mention}: main task accuracy.  \textbf{Race/Gender/Age}: protected attribute recovery difference from 50\% rate by the attacker (values below 50\% are as informative as those above it). $\mathbf{\Delta}$: the difference between the attacker score and the corresponding adversary's accuracy. The bold numbers are the best \textit{oblivious} classifiers within each configuration.}
\end{table*}

\paragraph{Other Attempts}
We also experienced with several other techniques:
reinitializing the adversarial weights every $t$ epochs; training the adversary without propagating the error to the encoder components for $t$ epochs and only then starting to propagate; using adversaries with more hidden layers; adding dropout on the encoded vectors and within the encoder. None of these yielded improvements over the above methods.

\paragraph{Results}
All methods are effective to some extent,
Table \ref{tbl:algo-all} summarizes the results.

Increasing the capacity of the adversarial network helped reduce the protected attribute's leakage, though different capacities work best on each setup. On the Sentiment/Race task, none of the higher dimensional adversaries worked better than the 300-dim one, on the PAN16 dataset it did. On PAN16/Gender the 8000-dim adversary performed best, and on PAN16/Age, the 500-dim one. 

Increasing the weight of the adversary through the $\lambda$ parameter also has a positive effect on the result (except on the Sentiment/Race pair).
However, too large $\lambda$ values make training unstable, and require many more epochs for the main-task to stabilize around a satisfying accuracy.

The adversarial ensemble method with 2 adversaries achieves 57.4\% on Sentiment/Race, as opposed to 56.0\% with a single one, but when using 5 different adversaries, we achieve 54.8\%. On the PAN16 dataset larger ensembles are more effective.
However, a potential issue with the ensemble method is that larger ensembles reduces training stability, similar to increasing the $\lambda$ value. For example, with 5 adversaries, the main-task accuracy remained at random for 5 epochs, and only begun rising at the 6th epoch. 
Using 10 adversaries, the main task could not be trained.

To summarize, while all methods are effective to some extent, it appears that (a) no method and parameter setting performs equally well across the different setups; and (b) no method succeeds in completely preventing the leakage of the protected attributes.
Combining the different methods (ensembles of larger networks, larger networks with larger $\lambda$, etc.) did not improve the results.

\begin{table}[t]
\begin{center}
\footnotesize{
\begin{tabular}{lr|rcc}
Method        & Param & Sentiment & Race \\ \hline \hline
No Adversary Baseline       & - & 79.5        & 23.5 \\
Standard Adversary			&  1.0 	& 76.8 &  10.6 \\
\hline
Adv-Capacity	
&  500 	& 74.8 & \textbf{13.8} \\
&  1000 	& 70.5 & 18.4 \\
&  2000 	& 73.9 & 18.5 \\
&  5000 	& 71.5 & 19.4 \\
&  8000 	& 73.6 & 18.7 \\
\hline
Lambda 			
&  0.5 	& 75.0 & 15.5 \\
&  1.5 	& 71.2 & 18.2 \\
&  2.0 	& 73.0 & 12.1 \\
&  3.0 	& 71.5 & \textbf{12.0} \\
&  5.0 	& 50.0 & - \\
\hline
Ensemble   		
&  2 	& 70.6 & 20.8 \\
&  3 	& 73.6 & 17.9 \\
&  5 	& 71.5 & \textbf{8.6} \\
\end{tabular}
}
\end{center}
\caption{Unbalanced Sentiment/Race with the different methods. \textbf{Sentiment}: task accuracy. \textbf{Race}: Attacker's recovery accuracy beyond 50\%.}
\label{tbl:algo-unbalanced}
\end{table}

\paragraph{Unbalanced Data Results}
We repeated the same set of experiments on the unbalanced Sentiment/Race corpus (Table \ref{tbl:algo-unbalanced}).
In this setup, the results are somewhat similar: increasing the adversarial capacity and $\lambda$ is ineffective, and even increases the attacker's recovery rate. However,
using an ensemble of 5 adversaries does manage to reduce the leakage, but it is still far from a satisfying result.

\section{Analysis}

The gap between the adversary's dev-set accuracy and the after-the-fact attacker accuracy on the same data is surprising. To better understand the phenomenon, we perform further analysis on the Sentiment/Race pair with the default single adversary.

\paragraph{Embedding Vs. RNN}
Recall that the \textit{attacker network} tries to extract as much information from the encoder's output as possible. The encoder consists of two components: (1) Embedding Matrix and (2) an RNN. Therefore, the leakage can be caused due to one of them (or due to their combination).

We conduct the following experiment to determine which part affects  the leakage more: we create a new encoder by composing 2 existing encoders: an encoder with high leakage (\textit{Leaky}, using the baseline encoder) and an encoder with low leakage (\textit{Guarded}, using the 5-Ensemble adversary). We fuse the two encoders by combining the embedding matrix of the \textit{Leaky} encoder with the RNN module of the \textit{Guarded} encoder, and vice versa.
This yields two new encoders: an encoder with a ``leaky'' Embedding Matrix module and a ``strong'' RNN module (\textit{Leaky-EMB}), and an encoder with a ``strong'' Embedding Matrix module and a ``leaky'' RNN module (\textit{Leaky-RNN}). We compare encoders \textit{Leaky-EMB} and \textit{Leaky-RNN} to gauge which module has a greater contribution to the data leakage.
We train attacker-networks over the encoders' output to predict the protected attributes.

\begin{table}[h]
\centering
\begin{tabular}{llcc}
                     &         & \multicolumn{2}{c}{Embedding} \\ \cmidrule{3-4}
                     &         & Leaky        & Guarded        \\
\multicolumn{1}{l|}{\multirow{2}{*}{\begin{turn}{90} RNN \end{turn}}} & Leaky  & 64.5              & 67.8               \\ 
\multicolumn{1}{l|}{}     & Guarded & 59.3              & 54.8              
\end{tabular}
\caption{Accuracies of the protected attribute with different encoders.}
\label{tbl:merge}
\end{table}

Table \ref{tbl:merge} summarize the results, implying that the leakage is caused mainly by the RNN, and less by the Embedding Matrix.\footnote{A discrepancy exists to some extent in the new encoders, as their parts originate from different models that were trained separately. To test if the fusion is valid, we train a different classifier on top of the new encoders to predict the main task. The combination of the leaked RNN with the guarded embeddings results in 65.4\% on the sentiment task and the other combination results in 60.9\% as opposed to 67.5\% and 63.8\% on the leaked and guarded models, respectively. As the new models are on par with the original ones, we conclude that the new encoders are valid.}

\paragraph{Consistent Leakage: Examples Inspection}
We are interested in tweets whose protected attribute (race) is correctly predicted by the adversary. However, at accuracy rates below 60\%, many of the correct predictions could be attributed to chance. To identify the relevant examples, we repeated the Sentiment/Race default adversary experiment 10 times  with different random seeds. We then trained 10 attacker networks, and used each of them to label all examples in the development set. We then looked for tweets which are consistently and correctly classified by at least 9 attackers.\footnote{776 correct and 946 consistent examples in total} Table \ref{tbl:race-ex} shows some of these cases.  Many of them include tokens (\emph{Naw}, \emph{Bestfrand}, \emph{tan}) and syntactic structures (\emph{Going over Bae house}) which are indeed predictive, though not the most salient features.

\begin{table*}[t]
\begin{center}
\footnotesize{
\begin{tabular}{l|l}
AAE (``non-hispanic blacks'')                                              & SAE  (``non-hispanic whites'')                                            \\ \hline
My Brew Eattin                          & I want to be tan again \\
\_ Naw im cool                          & Why is it so hot in the house ?! \\
Tonoght was cool                        & Been doing Spanish homework for 2 hours .              \\
My momma Bestfrand died                 & I wish I was still in Spain \\
Enoy yall day                           & Ahhhhh so much homework . \\
Going over Bae house                    & \_TWITTER-ENTITY\_ I miss you too ! \\
She not texting or calling ? Ok         & I want to move to california \\
Real relationships go thru real shit 	& Lol , I don't even go here .           \\
About to spend my entire check IDGAF    & Ahhhhh so much homework .\\
Getting ready for school                & I'm so tired .
\end{tabular}
}
\end{center}
\caption{Examples for correct dialectal/race predictions, which were predicted consistently by at least 9 different attacker-classifiers.}
\label{tbl:race-ex}
\end{table*}

\paragraph{Leakage via Embeddings}
Even though we found out the RNN is much more responsible to the leakage then the Embedding, those still contribute to the leakage and are easier to inspect. Therefore, we turn to inspect the encoders' Embedding.
We hypothesize that a possible reason for the adversarial network's inability to completely remove the protected race information is word frequency. Namely, rare words, which might be strongly identified with one group, didn't get enough updates during training and therefore remained predictive towards one of the groups. To quantify this, we compared two vocabularies: words appearing in tweets where the predictions were consistently predicted (9 or 10 out of 10 times) by the different attackers, and words appearing in tweets that were randomly distributed (50\%) between the attackers. If our hypothesis is correct, we expect words from the second group to be more frequent than words in the first group.  We discard words appearing in both groups, and associate each word with its training set frequency. One-tailed Mann-Whitney $U$ test \cite{mann1947test} showed the effect is highly significant with $p<e^{-12}$.

\begin{table}[ht!]
\begin{center}

\footnotesize{

\begin{tabular}{lllrrr}
Data  	& Task & \begin{tabular}[c]{@{}l@{}}Protected\\ Attribute\end{tabular} &  $\Delta$ \\ \hline \hline
\textsc{Dial} 	& Sentiment & Race      & 12.2   \\
	 	& Mention   & Race      & 14.3   \\ \hline
PAN16 	& Mention	& Gender	& 8.1 	 \\
	  	& Mention   & Age		& 9.7
\end{tabular}
}
\end{center}
\caption{Attacker's performance on different datasets. Results are on a training set 10\% held-out. $\Delta$ is the difference between the attacker score and the corresponding adversary's accuracy.}
\label{tbl:overfitting} 

\vspace{-0.2cm}

\end{table}


\paragraph{Data Overfitting?}

Standard ML setups often suffer from overfitting on the training data, especially when using neural-networks which tend to memorize the data they encounter. In the adversarial setup, the overfitting could result in the encoder-adversary pair working together to perfectly clean the attributes from the training data, without generalization. Such overfitting could explain the attacker success. Is this what happened?
We test this hypothesis by using the same \textit{attacker networks} experiments solely on the training data.  We train the \textit{attackers} on 90\% of the training data while using the rest 10\% as held-out. If overfitting has occurred, the accuracy is likely to result in 50\% accuracy. Alas, this is not the case. Table \ref{tbl:overfitting} summarize the training accuracies of the \textit{attacker network}. The Mention/Race task achieves the highest score of 64.3\% whereas the Mention/Gender task achieves the lowest - 58.1\%. Even though when trained directly to predict these attributes without the adversarial setup, the training accuracies are much higher, a substantial amount of signal is still left, even in the training data.

\section{Related Work}

The fact that intermediary vector representations that are
trained for one task are predictive of another is not surprising: it is at the core of the success of NLP methods for deriving ``generic'' word and sentence representations (e.g. Word2vec \cite{mikolov2013distributed}, Skip-thought vectors
\cite{kiros2015skip}, Contextualized Word Representations \cite{melamud2016context2vec, peters2018deep} etc.).
While usually considered a positive feature, it can often have undesired consequences one should be aware of and potentially control for.
Several works document biases and stereotypes that are captured by unsupervised word embeddings \cite{bolukbasi2016man,caliskan2017semantics} and ways of mitigating them \cite{bolukbasi2016man,zhang2018mitigating}.
Bias and stereotyping were also documented on a common NLP dataset \cite{rudinger2017social}.
While these work are concerned with the learned representations encoding unwanted biases about the world, our concern is with capturing potentially sensitive demographic information about individual authors of the text.

Removing sensitive attributes (demographic or otherwise) from intermediate representations in order to achieve fair classification has been explored by solving an optimization problem \cite{zemel2013learning}, as well as by employing adversarial training  \cite{edwards2015censoring, louizos2015variational, xie2017controllable, zhang2018mitigating}, focusing on structured features. Adversarial training was also applied for Image anonymization \cite{edwards2015censoring, feutry2018learning}.
In contrast, we consider features that are based on short user-authored text.

Several works apply adversarial training to textual data, in order to learn encoders that are invariant to some properties of the text \cite{chen2016adversarial,conneau2017word, zhang2017aspect,xie2017controllable}. As their main motivation is to remove information about domain or language in order to improve transfer learning, domain adaptation, or end task accuracy, they were less concerned with the ability to recover information from the resulting representation, and did not evaluate it directly as we do here.

Recent work on creating private representation in the text domain \cite{P18-2005} share our motivation of removing unintended demographic attributes from the learned representation using adversarial training. However, they report only the discrimination accuracies of the adversarial component, and do not train another classifier to verify that the representations are indeed clear of the protected attribute. As our work shows, trusting the adversary is insufficient, and external verification is crucial.

Finally, our work is motivated by the desire for fairness. We use a definition in which a fair classification is one that does not condition on a certain attribute (\emph{fairness by blindness}), and evaluate the ability to achieve text-derived representations that are blind to a property we wish to protect. Many other definitions of fairness exist, including \textit{demographic parity}, \textit{equality of odds} and \textit{equality of opportunity} (see e.g. discussion in \cite{hardt2016equality,beutel2017data}). Under our setup, blindness guarantees these metrics (Appendix  \ref{sec:fair-acc}).

\section{Conclusions}
We show that demographic information leaks into intermediate representations of neural networks trained on text data. Systems that train on text data and do not want to condition on demographic information must take active steps against accidental conditioning. Our experiments suggest that:\\
(1) Adversarial training is effective for mitigating protected attribute leakage, but, when dealing with text data, may fail to remove it completely.\\
(2) When using the adversarial training method, the adversary score during training \emph{cannot be trusted}, and must be verified with an externally-trained attacker, preferably on unseen data.\\
(3) Tuning the capacity and weight of the adversary, as well as using an ensemble of several adversaries, can improve the results. However, no single method is the most effective in all cases.

\section*{Acknowledgments}
We would like to thank Moni Shahar, Felix Kreuk, Yova Kementchedjhieva and the BIU NLP lab for fruitful conversation and helpful comments. We also thank Su Lin Blodgett for her help in supplying the \textsc{Dial} dataset and clarifications. This work was supported in part by the The Israeli Science Foundation (grant number 1555/15) and German Research Foundation via the
German-Israeli Project Cooperation (DIP, grant DA 1600/1-1).

\bibliography{emnlp2018}
\bibliographystyle{acl_natbib_nourl}

\clearpage
\appendix

\section{Standard Fairness Definitions and Guarded Classifiers}
\label{sec:fair-acc}

In this work, we focus on creating a representation which is oblivious to some factor. We measure this by the term \textit{leakage} which is defined in Section \ref{sec:setup} and say that a classifier is \textit{guarded} in respect to an attribute $z$ if $z$ is \textit{guarded} and hasn't \textit{leaked}. In this section, we show that this definition matches more common definitions of fairness, under our setup. Specifically, we show that under the setup we discuss, if $z$ is guarded than the classifier satisfies \textit{Demographic Parity}, \textit{Equality of Odds} and \textit{Equality of Opportunity}.\footnote{We note however that, as we discussed, achieving 0-leakage is far from trivial.}

For completeness, we repeat these definitions provided and redefined by Hardt et al. \shortcite{hardt2016equality}, Beutel et al. \shortcite{beutel2017data} and Zhang et al. \shortcite{zhang2018mitigating}:

\paragraph{Demographic Parity.} A predictor $f$ satisfies \textit{demographic parity} if $f$ and $z$ are independent: 
\[P(f=\hat{y}|z=0)=P(f=\hat{y}|z=1)\]

\paragraph{Equality of Odds.} A predictor $f$ satisfies \textit{equality of odds} if $f$ and $z$ are conditionally independent of the main task label $Y$: 

\begin{gather*}
P(f=\hat{y}|z=0,Y=y)=\\
P(f=\hat{y}|z=1,Y=y), y \in \{0,1\}
\end{gather*}

\paragraph{Equality of Opportunity.} A predictor $f$ satisfies \textit{equality of opportunity} if $f$ and $z$ are conditionally independent on a particular value $Y$:
\begin{gather*}
P(f=\hat{y}|z=0,Y=y)=\\
P(f=\hat{y}|z=1,Y=y), y \in \{0|1\}
\end{gather*}

Recall that in all our data split setups, there is an equal appearance of both $y$ and $z$. We now claim the following:

\begin{lemma} \label{lemma:par}
If a classifier is \textit{guarded} to $z$ in our setup, it realizes \textit{demographic parity}
\end{lemma}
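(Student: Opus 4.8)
The plan is to prove the contrapositive: I will show that if \emph{demographic parity} fails, then $z$ must have \emph{leaked}, so the classifier is not \emph{guarded}. The crucial observation is that the predictor $f(x)=c(h(x))$ is itself a deterministic function of the representation $\mathbf{h}_x$, so any post-processing of $f$'s output is a legitimate attacker $c'(\mathbf{h}_x)$ of exactly the kind appearing in the definition of guardedness in Section~\ref{sec:setup}. This lets me turn a violation of parity into an explicit above-chance classifier for $z$.

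Concretely, suppose demographic parity is violated, so there is a value $\hat{y}$ with $p_0 := P(f=\hat{y}\mid z=0) \neq P(f=\hat{y}\mid z=1) =: p_1$; without loss of generality assume $p_1 > p_0$ (otherwise swap the roles of the two classes). I would then define an attacker $c'$ that first recomputes $f$ from $\mathbf{h}_x$ (possible since $f = c \circ h$ reads only $\mathbf{h}_x$) and then guesses $z=1$ when $f=\hat{y}$ and $z=0$ otherwise. The next step is the accuracy computation, and this is exactly where the balanced-split assumption $P(z=0)=P(z=1)=\tfrac{1}{2}$ enters: under it, the accuracy of $c'$ equals $\tfrac{1}{2}(1-p_0) + \tfrac{1}{2}p_1 = \tfrac{1}{2} + \tfrac{1}{2}(p_1 - p_0) > \tfrac{1}{2}$, strictly above the $50\%$ chance level. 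Hence $c'$ recovers $z$ beyond chance, so $z$ has leaked, contradicting guardedness; this establishes the contrapositive and therefore the lemma.

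I expect the only genuinely delicate point to be conceptual rather than computational: justifying that $c'$, obtained by relabeling the binary output of $f$, really counts as a valid attacker $c'(\mathbf{h}_x)$ in the sense of the leakage/guardedness definitions (it must be expressible as a function trainable on $\mathbf{h}_x$, which it is, since it factors through the learned $f$). Once this is granted, the remainder reduces to the elementary accuracy bound above. It is worth emphasizing that the balance of the data split is not a cosmetic assumption here but is precisely what pins the chance level at $50\%$, so that \emph{any} statistical dependence between $f$ and $z$ translates into strictly above-chance recovery; in an unbalanced setup one would instead have to compare against the majority-class base rate, and the clean equivalence would require a more careful treatment.
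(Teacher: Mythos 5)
Your proof is correct, but it takes a genuinely different route from the paper's. The paper argues in the forward direction: it reads guardedness as the information-theoretic statement $P(Z\mid H)=0.5$, concludes (using $P(Z)=0.5$) that $Z$ and the representation $H$ are independent, and then, since $\hat{Y}=c(H)$ is a deterministic function of $H$, pushes the independence through a Bayes-rule manipulation to obtain $P(\hat{Y},Z)=P(\hat{Y})P(Z)$, i.e.\ demographic parity. You instead prove the contrapositive by an explicit attacker construction: a parity violation $p_1=P(f=\hat{y}\mid z=1)>P(f=\hat{y}\mid z=0)=p_0$ yields the classifier $c'(\mathbf{h})=\mathbb{1}\left[c(\mathbf{h})=\hat{y}\right]$, whose accuracy under the balanced split is $\tfrac{1}{2}+\tfrac{1}{2}(p_1-p_0)>\tfrac{1}{2}$, witnessing leakage. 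Your computation is right, and your realizability worry is easily discharged: since $c'$ is just a relabeling of the binary output of $c$, it lies in the same hypothesis class as the paper's attacker (an MLP with one hidden layer --- permute the output units of $c$). Notably, your route is in one respect \emph{more} faithful to the paper's operational definition of guardedness (``no trainable classifier $c'(\mathbf{h}_x)$ beats chance''): the paper's proof silently strengthens that definition to the distributional statement $P(Z\mid H)=0.5$ (equivalent only if one grants the attacker Bayes-optimal power), whereas your contrapositive needs only the literal definition --- exhibiting one above-chance classifier suffices. What the paper's direct route buys in exchange is the stronger intermediate fact $Z\perp H$, which underlies the whole representation-level story and feeds the subsequent lemmas; your argument establishes exactly the stated implication and, as you correctly emphasize, isolates precisely where the balanced-split assumption $P(z{=}0)=P(z{=}1)=\tfrac{1}{2}$ is used to pin the chance level at $50\%$.
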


\begin{proof}
The classifier is oblivious to $z$, meaning that:
$P(Z|H) = 0.5$, where $H$ is the internal representation. As in our setup, $P(Z)=0.5$
\[\Rightarrow P(Z)=P(Z|H)\]
therefore $Z$ and $H$ are independent.

From graphical models we know that $Y$ and $X$ (the textual input), are independent given $H$ (symmetrically for $Z$ and $X$), therefore we can overlook $X$ when conditioning on $H$.
Also, we can say that given $H$, $\hat{Y}$ and $Z$ are independent:
$$P(\hat{Y},Z|H)=P(\hat{Y}|H)P(Z|H)$$

using bayes rule on both sides:
\begin{gather*}
\frac{P(H|\hat{Y},Z)P(\hat{Y},Z)}{P(H)} = \frac{P(H|\hat{Y})P(\hat{Y})}{P(H)}P(Z|H) \\
\Rightarrow P(H|\hat{Y},Z)P(\hat{Y},Z)=P(H|\hat{Y})P(\hat{Y})P(Z|H)
\end{gather*}

and from the independency of $Z$ and $H$, we get that:
$$P(\hat{Y},Z)=P(\hat{Y})P(Z|H)$$
As $Z$ and $H$ are independent:
$$P(\hat{Y},Z)=P(\hat{Y})P(Z)$$
meaning that $\hat{Y}$ and $Z$ are independent
\end{proof}

\begin{lemma} \label{lemma:odds}
If a classifier is oblivious to $z$ in our setup, it realizes equality of odds.
\end{lemma}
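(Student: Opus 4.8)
The plan is to prove equality of odds by replaying the argument of Lemma~\ref{lemma:par} (demographic parity), but carrying a conditioning on the true label $Y$ through every step. The target is the conditional-independence statement $\hat{Y} \perp Z \mid Y$, which is exactly $P(f=\hat{y}\mid z, Y=y) = P(f=\hat{y}\mid Y=y)$ for each $y \in \{0,1\}$. The enabling observation is that our data construction keeps each main-task stratum balanced with respect to the protected attribute, so $P(Z \mid Y=y) = P(Z) = 0.5$ for both values of $y$; hence restricting attention to a fixed $Y=y$ reproduces, within that stratum, the same balanced premise that drove the marginal proof.

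First I would record the two structural facts reused from before: obliviousness of the encoder, and the fact that $\hat{Y}=c(H)$ is a deterministic function of $H$, so that $\hat{Y} \perp (Z,Y) \mid H$. The crux is then to upgrade the marginal independence $Z \perp H$ to the conditional independence $Z \perp H \mid Y$. Granting this, the finish is mechanical: write $P(\hat{Y}, Z \mid Y=y) = \sum_h P(\hat{Y}\mid H=h)\,P(H=h, Z \mid Y=y)$, and apply $Z \perp H \mid Y$ to factor the summand into $P(\hat{Y}\mid H=h)\,P(H=h\mid Y=y)\,P(Z\mid Y=y)$, which collapses to $P(\hat{Y}\mid Y=y)\,P(Z\mid Y=y)$. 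Equivalently, one can mirror the Bayes-rule chain of Lemma~\ref{lemma:par} verbatim with every probability conditioned on $Y=y$.

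The main obstacle is precisely the implication $Z \perp H \Rightarrow Z \perp H \mid Y$: marginal independence does not imply conditional independence in general (a collider configuration such as $H = Y \oplus Z$ satisfies $Z \perp H$ yet $Z \not\perp H \mid Y$). So the conditional statement cannot come for free from obliviousness alone; it must be drawn from the graphical assumptions together with the per-stratum balance. The intended route is to argue, via the Markov structure in which $Z$ communicates with $H$ only through $X$ and each $Y$-stratum is balanced, that guardedness actually holds stratum-wise, i.e.\ $P(Z \mid H, Y=y) = P(Z \mid Y=y) = 0.5$; this is the honest content of ``oblivious in our setup,'' and once granted it yields $Z \perp H \mid Y$ directly. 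I would state this conditional-balance assumption explicitly rather than letting it hide inside the phrase ``oblivious to $z$,'' since it is the load-bearing hypothesis.

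Finally, I would note that equality of opportunity is simply the single-stratum specialization (one fixed value of $Y$) of the same computation, so it follows as an immediate corollary and need not be argued separately.
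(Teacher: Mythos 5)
Your proposal is correct, takes a genuinely different route from the paper's, and --- importantly --- your discussion of the ``main obstacle'' exposes a real gap in the paper's own proof. The paper argues from demographic parity alone: it expands $P(f=\hat{y}\mid z)$ by the law of total probability, cancels the balanced weights $P(y=0)=P(y=1)=0.5$ to reach
\begin{gather*}
P(f=\hat{y}\mid z=0,y=0)+P(f=\hat{y}\mid z=0,y=1)=\\
P(f=\hat{y}\mid z=1,y=0)+P(f=\hat{y}\mid z=1,y=1),
\end{gather*}
and then asserts, ``due to Lemma~\ref{lemma:par},'' that the summands agree termwise. That final step is a non sequitur: a single equation between sums does not split into per-stratum equalities, and no $Y$-conditioned version of Lemma~\ref{lemma:par} is available, because the obliviousness hypothesis $P(Z\mid H)=0.5$ is only marginal. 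Your collider example makes the failure concrete: with $Y,Z$ independent fair coins, $H=Y\oplus Z$, and $\hat{Y}=c(H)=H$, the encoder is guarded and the data jointly balanced, yet equality of odds fails --- so the implication is false from the paper's stated hypotheses and cannot be patched without strengthening them. Your route does exactly that: you isolate the load-bearing assumption $P(Z\mid H,Y=y)=P(Z\mid Y=y)=0.5$ (i.e.\ $Z\perp H\mid Y$, stratum-wise guardedness), after which the factorization $P(\hat{Y},Z\mid Y=y)=\sum_h P(\hat{Y}\mid H=h)\,P(H=h\mid Y=y)\,P(Z\mid Y=y)$ yields the conclusion mechanically, and equality of opportunity falls out as the single-stratum specialization, matching the paper's own one-line treatment of its third lemma. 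What the paper's route would buy, were it sound, is a weaker hypothesis --- marginal guardedness, which is what an attacker's chance-level accuracy actually certifies; what yours buys is a valid proof, at the price of a stronger hypothesis that the attacker methodology does not directly verify, since the attacker would have to be at chance within each $Y$-stratum separately. Stating that assumption explicitly, as you propose, is the right repair.
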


\begin{proof}
If a classifier is \textit{oblivious} to $z$, this means that:
$$P(f=\hat{y}|z=0)=P(f=\hat{y}|z=1)$$ (from Lemma \ref{lemma:par}), therefore, from the law of total probability:
\begin{gather*}
P(f=\hat{y}|z=0,y=0) \cdot P(y=0) + \\
P(f=\hat{y}|z=0,y=1) \cdot P(y=1) = \\
 P(f=\hat{y}|z=1,y=0) \cdot P(y=0) + \\
 P(f=\hat{y}|z=1,y=1) \cdot P(y=1)
\end{gather*}

Since $P(y=0)=P(y=1)=0.5$ in all setups, we can get rid of $P(y)$, and we get:
\begin{gather*}
P(f=\hat{y}|z=0,y=0) + \\ 
P(f=\hat{y}|z=0,y=1) = \\ 
P(f=\hat{y}|z=1,y=0) + \\
P(f=\hat{y}|z=1,y=1)
\end{gather*}
 and due to Lemma \ref{lemma:par} we get
 
\begin{gather*}
P(f=\hat{y}|z=0,y=1) = P(f=\hat{y}|z=1,y=1)
\end{gather*}
(and similarly for $y=0$).
\end{proof}

\begin{lemma}
If a classifier is oblivious to $z$ in our setup, it realizes equality of opportunity.
\end{lemma}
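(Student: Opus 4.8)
The plan is to observe that \emph{equality of opportunity} is nothing more than \emph{equality of odds} restricted to a single distinguished value of the main-task label. Comparing the two definitions stated above, equality of odds asserts $P(f=\hat{y}|z=0,Y=y)=P(f=\hat{y}|z=1,Y=y)$ for \emph{every} $y\in\{0,1\}$, whereas equality of opportunity asserts exactly the same equality for one particular value of $y$. Hence equality of odds is logically the stronger requirement, and equality of opportunity is an immediate consequence of it.

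Concretely, I would invoke Lemma~\ref{lemma:odds}, which I am entitled to assume, to conclude that a guarded (oblivious) classifier satisfies $P(f=\hat{y}|z=0,Y=y)=P(f=\hat{y}|z=1,Y=y)$ for both $y=0$ and $y=1$. Fixing attention on the single value of $y$ named in the definition of equality of opportunity (typically the favorable outcome $y=1$), the desired identity holds because it is simply one of the two equalities already established. No new probabilistic manipulation, Bayes-rule application, or appeal to the balancedness $P(y=0)=P(y=1)=0.5$ is required beyond what went into Lemma~\ref{lemma:odds}; the argument is a one-line specialization.

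I do not anticipate any genuine obstacle here, since the entire content lies in reading the two definitions carefully and noticing the quantifier difference (``for all $y$'' in equality of odds versus ``for a particular $y$'' in equality of opportunity). The only point worth double-checking is the slightly garbled set notation $\{0|1\}$ appearing in the statement of equality of opportunity, which I read as selecting a single value rather than ranging over both; under that reading the claim is precisely the restriction of Lemma~\ref{lemma:odds} to one coordinate and follows at once.
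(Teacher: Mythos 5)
Your proposal is correct and follows exactly the paper's route: the paper likewise notes that equality of opportunity is a relaxation of equality of odds and derives it immediately from Lemma~\ref{lemma:odds}. Your version merely spells out the quantifier specialization (``for all $y$'' versus ``for one particular $y$'') more explicitly than the paper's one-line proof, which is a fair reading of the paper's $\{0|1\}$ notation.
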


\begin{proof}
As equality of opportunity is a relaxation of \textit{equality of odds} and is less strict then it, from Lemma \ref{lemma:odds}, it holds automatically.
\end{proof}

In conclusion, we showed that an \textit{oblivious classifier} on our setup would satisfy the three fairness definitions which were introduced above.

\section{Implementation Details}
\label{app:imp-det}

\paragraph{Preprocessing}
We tokenize each tweet using \textit{twokenize}, a twitter specific tokenizer \cite{o2010tweetmotif,owoputi2013improved}, and discard duplicate tweets and tweets with less than three tokens.

\paragraph{Neural Network architecture and hyperparameters}
Unless otherwise noted, both the LSTM encoder and the MLP hidden layer have 300 hidden units with a single layer. We use randomly-initialized 300-dimensional embeddings, and train using SGD with Momentum 
\cite{qian1999momentum} and a learning rate of 0.01 for 100 epochs.  We use dropout \cite{hinton2012improving} of 0.2 on all hidden layers, and negative log likelihood as the loss function with 32 sized mini-batch.

\section{Emojis Details}

\label{sec:emojis}
Figure \ref{fig:pos-neg-emojis} contains the different emojis we used for defining positive and negative tweets. In addition to those emojis, we also looked for the following emoticons: :) :-) : ) :D =) as positive and :( :-( : ( :-( =( as negative

\begin{figure}[h!]
 \centering 
 \includegraphics[scale=1.0]{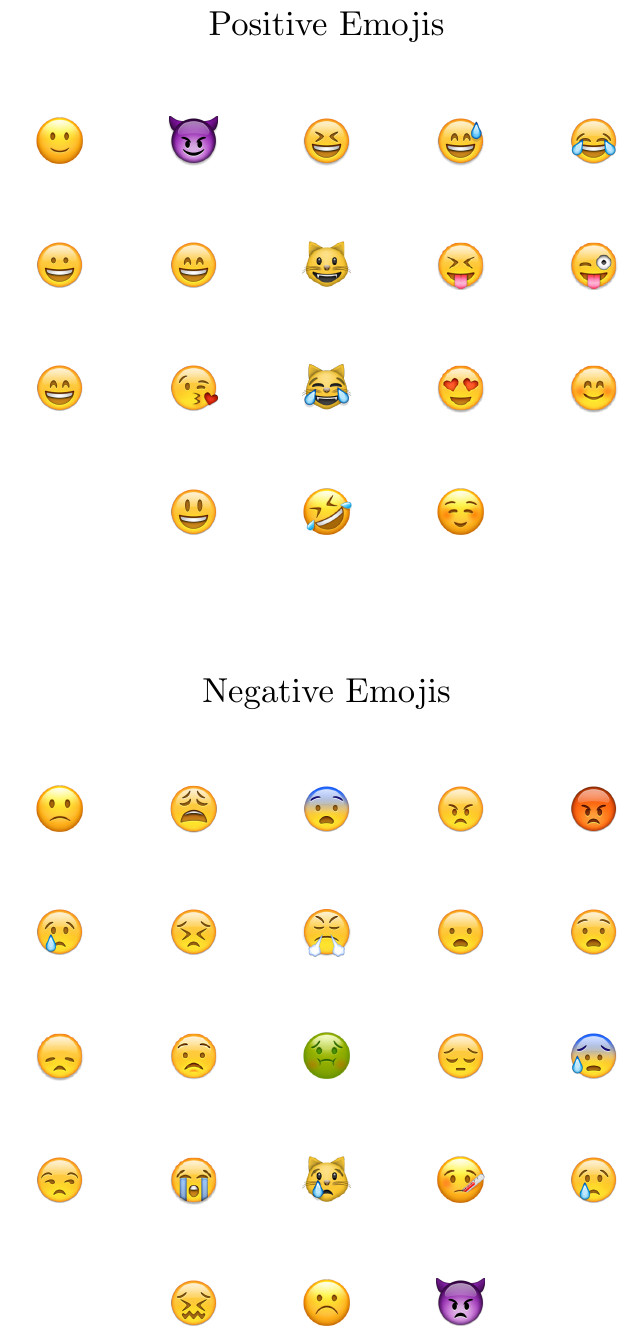}
 \caption{Emojis used as positive and negative proxies for sentiment}
 \label{fig:pos-neg-emojis}
\end{figure}

\end{document}